\definecolor{Gray}{gray}{0.9}
\newcommand\cons[1]{\widehat{#1}}
\newcommand\pars[1]{\widecheck{#1}}
\newcommand{\var}{\textbf{X}}
\newcommand\inst[1]{\textbf{#1}}
\theoremstyle{definition}
\newtheorem{definition}{Definition}
\newtheorem{remark}{Remark}
\newtheorem{example}{Example}
\newtheorem{proposition}{Proposition}
\newtheorem{lemma}{Lemma}
\title{Sufficient reasons for classifier decisions in the presence of constraints}
\author{
Niku Gorji\footnote{Contact Author}\and
Sasha Rubin\\
 \affiliations
School of Computer Science\\
The University of Sydney, Australia
 \emails
\{niku.gorji, sasha.rubin\}@sydney.edu.au
}
\begin{document}

\maketitle

\begin{abstract}

    Recent work has unveiled a theory for reasoning about the decisions made by binary classifiers: a classifier describes a Boolean function, and the reasons behind an instance being classified as positive are the prime-implicants of the function that are satisfied by the instance. One drawback of these works is that they do not explicitly treat scenarios where the underlying data is known to be constrained, e.g., certain combinations of features may not exist, may not be observable, or may be required to be disregarded. We propose a more general theory, also based on prime-implicants, tailored to taking constraints into account. The main idea is to view classifiers in the presence of constraints as describing partial Boolean functions, i.e., that are undefined on instances that do not satisfy the constraints. We prove that this simple idea results in reasons that are no less (and sometimes more) succinct. That is, not taking constraints into account (e.g., ignored, or taken as negative instances) results in reasons that are subsumed by reasons that do take constraints into account. We illustrate this improved parsimony on synthetic classifiers and classifiers learned from real data.
\end{abstract}

\section{Introduction}

Machine learning models are currently used to assist decision makers. In case the domains involve high-stakes decisions (e.g., criminal justice and healthcare), model understanding can be used to help with debugging, detect bias in predictions, and vet models~\cite{explainml-tutorial}. One form of model-understanding is to explain decisions of pre-built models in a post-hoc manner. Models that may not lend themselves easily to interpretability (e.g., some neural networks, random forests or Bayesian network classifiers) or in cases there is an accuracy-interpretability tradeoff, can be passed as inputs to algorithms to produce explanations~\cite{explainml-tutorial}.

In this paper we follow a recent line of logic-based approaches for supplying explanations behind individual classifier decisions~\cite{shih2018symbolic,darwiche2020reasons} (rather than explaining the whole model). These works treat the input-output behaviour of a learned binary classifier as a total Boolean function, independently of how the classifier was learned or is implemented. In order to extract specific reasons, these works encode and manipulate the functions symbolically, e.g., as BDDs.

The problem we address in this work is how to incorporate background knowledge, specifically input/domain constraints, into supplying reasons behind individual decisions of classifiers.\footnote{We do not deal with constraints on the possible outputs of a classifier~\cite{xu2018semantic}.}
Such constraints may arise from  the structure and inter-dependencies between features present in data~\cite{darwiche2020three}. For example, in a medical setting, some combinations of drugs may never be prescribed together (and thus will not appear in any dataset or clinical setting): if we know that drug A and drug B are never prescribed together (this is the constraint), then a reason of the form ``drug A was prescribed and drug B was not prescribed" is overly redundant; it is more parsimonious to supply the reason ``drug A was prescribed". 

Our contributions are as follows:
\begin{enumerate}
\item 
We provide a crisp formalisation that takes constraints into account, resulting in reasons that are as least (and sometimes more) parsimonious, i.e., more general and more succinct, than not taking constraints into account. The idea is to present a classifier as a partial function that is not defined on those input instances that do not satisfy the constraint, and then to use the classic definition of prime-implicant on partial functions~\cite{coudert1994two} as the instantiation of the word ``reason''. This immediately and naturally generalises the work of \cite{shih2018symbolic,darwiche2020reasons} from the unconstrained setting to the constrained setting. 
\item  The main computational problem is to find all reasons of a classifier-decision (for a given instance) in the presence of constraints. We provide a simple reduction of this problem to the unconstrained setting, allowing one to {re-use} existing algorithms and tools. The idea is that if the constraint is given by the Boolean formula $\kappa$, and the decision function by $\varphi$, then reasons of $\varphi$-decisions that take $\kappa$ into account are exactly the reasons of $(\kappa \to \varphi)$-decision.\footnote{This simple reduction can be done in linear-time for formulas represented as parse-trees, and in polynomial-time for formulas represented as $\textrm{OBDD}_<$ \cite{darwiche2002knowledge}.} Interestingly, all other variations, (including the seemingly natural $(\kappa \wedge \varphi)$), provide no more, and sometimes less, parsimonious reasons.
\item We show, both theoretically and empirically on synthetic classifiers and classifiers learned from data, that approaches that ignore constraints may supply reasons that 
are unnecessarily long since they redundantly encode knowledge already described in the constraints. 
\end{enumerate}

\section{Preliminaries}

We begin by recalling just enough logical background to be able to explain our theory.

\paragraph{Boolean logic, partial Boolean functions, and prime-implicants.}  Let $\var =\{X_1, X_2, \cdots, X_n\}$ be a set of $n$ Boolean variables (aka \emph{feature variables}).  The set of \emph{Boolean formulas} is generated from $\var$, the constants $\top$ (verum/true) and $\bot$ (falsum/false), 
and the logical operations $\land,\lor,\lnot, \to$ and $\leftrightarrow$. Variables $X$ and their negations $\lnot X$ are called  \emph{literals}.

A \emph{term} ${t}$ is a conjunction of literals; the empty-conjunction is also denoted $\top$.

An \emph{instance (over $\var$)} is an element of $\{0,1\}^n$, and is denoted $\inst{x}$ (intuitively, it is an instantiation of the variables $\var$). An instance $\inst{x}$ \emph{satisfies} a formula $\varphi$ if $\varphi$ evaluates to true when the variables in $\varphi$ are assigned truth-values according to $\inst{x}$. The set of instances that satisfy the formula $\varphi$ is denoted $[\varphi]$. Thus we can represent sets of instances by formulas, i.e., the set $[\varphi]$ is represented by $\varphi$. If $[\varphi] = [\psi]$ then we say that $\varphi,\psi$ are \emph{logically equivalent}, i.e., they mean the same thing.

For terms $s,t$, we say that \emph{$s$ subsumes $t$} if $[{t}] \subseteq [{s}]$, i.e., if every instance that satisfies $t$ also satisfies $s$. If $[t] \subset [s]$ then we say that $s$ \emph{properly subsumes} $t$.

\begin{definition}
A \emph{partial Boolean function $f$ (over $\var$)} is a function $\{0,1\}^n \to \{0,1,*\}$. 
\end{definition}
For $i \in \{0,1,*\}$ define $f^i := f^{-1}(i)$. Call $f^1$ the function's \emph{onset}, $f^0$ its  \emph{offset}, and $f^*$ its \emph{don't-care set}. The \emph{care set} of $f$ is the set $f^0 \cup f^1$. If the don't-care set is empty, then $f$ is a \emph{total Boolean function}.
The instances in the onset are called \emph{positive instances of $f$}.
We can represent a total Boolean function by a formula $\varphi$ such that $[\varphi]=f^1$.

The following definition {generalises} the notion of implicant and prime-implicant from total Boolean functions (cf.~\cite{quine1952problem,shih2018symbolic,darwiche2020reasons}) to partial Boolean functions $f$ 
 (cf.~\cite{mccluskey1956minimization,coudert1994two}).
\begin{definition}
A term $t$ is an \emph{implicant of $f$} if $[t] \subseteq f^1 \cup f^*$; it is \emph{prime} if no other implicant of $f$ subsumes $t$.
\end{definition}

Intuitively, $t$ is prime if removing any literal from $t$ results in a term that is no longer an implicant.

\paragraph{Decision functions} \label{sec:DH2020}

Total Boolean functions naturally arise as the decision-functions of threshold-based binary classifiers~
\cite{choi2017compiling,shih2018symbolic}: the \emph{decision function} $f$ of a threshold-based classifier is the function that maps an instance $\textbf{x}$ to $1$ if $Pr(d=1 | \textbf{x}) \geq T$, and to $0$ otherwise; here $d$ is a binary class variable, and $Pr$ is the distribution specified by the classifier, {and $T$ is a user-defined classification threshold}. Note that decision functions thus defined are \textbf{total} Boolean functions.
Let $\inst{x}$ be a positive instance of $f$. In~\cite{darwiche2020reasons}, the \emph{sufficient reasons} behind the decision $f(\inst{x})=1$ are defined to be the terms $t$ such that (i) $t$ is a prime-implicant of $f$, and (ii) $t$ is satisfied by $\inst{x}$.

\begin{example} \label{ex:biconditional}
Consider the total Boolean function $f$ over $\var = \{X_1,X_2\}$ represented by the formula $(X_1 \leftrightarrow X_2)$ (see the third column of Table~\ref{tbl:PBF}).
The prime-implicants of $f$ are $(X_1 \land X_2)$ and $(\lnot X_1 \land \lnot X_2)$.

The only sufficient reason of $f(0,0)=1$ is the term $(\lnot X_1 \land \lnot X_2)$, and the only sufficient reason of $f(1,1)=1$ is the term $(X_1 \land X_2)$.
\end{example}

\section{Problem Setting } \label{sec:problem}

The problem we address is how to define reasons behind decision-functions in the presence of domain constraints.

\begin{definition}
A \emph{constraint} is a set $C$ of instances over $\var$. We can represent a constraint by a formula $\varphi$ such that $[\varphi]=C$.
\end{definition}

The following are just a few examples that show that constraints are ubiquitous. In a medical setting, constraints of the form $(X_1 \to X_2)$ may reflect that people with condition $X_1$ also have condition $X_2$, e.g., $X_1$ may mean ``is pregnant'' and $X_2$ may mean ``is a woman''. In a university degree structure, constraints of the form $X_1 \to (X_2 \land X_3)$ may reflect that $X_2$ and $X_3$ are prerequisites to $X_1$; constraints of the form $X_1 \to \lnot(X_2 \lor X_3)$ may reflect prohibitions; 
and constraints of the form $X_1 \land X_2$ may reflect compulsory courses. In configuration problems, e.g., that arise when users purchase products, the user may be required to configure their product subject to certain constraints, and constraints of the form $(X_1 \lor X_2) \land \lnot (X_1 \land X_2)$ may reflect that the user needs to choose between two basic models. These constraints also result from one-hot encodings of a categorical variables, e.g., if $M$ is a 12-valued variable representing months, and $X_i$ for $i = 1, \cdots, 12$ is Boolean variable, then the induced constraint is $\left(\bigvee_i X_i\right) \land \left(\bigwedge_{i \neq j} \lnot (X_i \land X_j)\right)$. Finally, combinatorial objects have natural constraints, e.g., rankings are ordered sets, trees are acyclic graphs, and games have rules, see Section~\ref{sec:experiments}.

Just as a binary classifier describes a total Boolean function, a binary classifier in the presence of a constraint $C$ describes a \emph{partial Boolean function $f$ whose care-set is $C$}. Indeed, the assumption in this paper is that constraints are \emph{hard}, i.e., instances  that are not in $C$ {are not possible and} can be ignored (e.g., they will not appear in training or testing data).

However, many techniques in Machine Learning, such as threshold-based classifiers,  produce representations of \emph{total} Boolean functions. This suggests the following terminology:

\begin{definition}
A \emph{constrained decision-function} is a pair $(f,C)$ consisting of a total Boolean function $f$ and a constraint $C$.
\end{definition}

We thus ask: 

\begin{center}
\begin{tabular}{ |p{78mm}| } 
 \hline

How should one define reasons behind decisions of constrained decision-functions?\\

 \hline
\end{tabular}
\end{center}

We posit that a reasonable notion of ``reason" that takes constraints into account should have the following properties:
\begin{enumerate}
\item it does not depend on the representation of $(f,C)$, i.e., it is a semantic notion; 
\item it does not depend on the values $f(x)$ for $\inst{x} \not \in C$, i.e., if $f,g$ agree on $C$ (and perhaps disagree on the complement of $C$), then reasons for $(f,C)$ should be the same as reasons for $(g,C)$; 

\item it is not less succinct (and is sometimes more succinct) than not taking constraints into account;
\item in case there are no constraints, i.e., $C = \{0,1\}^n$, we recover the notion of reasons from~\cite{darwiche2020reasons,shih2018symbolic}. 
\end{enumerate}
Our formalisation in the next section satisfies these properties.

\section{Formalisation of Reasons in the Presence of Constraints}

The main objective of this work  is to provide a principled definition for reasons behind decisions made by constrained decision-functions $(f,C)$. Let $f_C$ be the partial Boolean function that maps $\inst{x}$ to $f(\inst{x})$ if $\inst{x} \in C$, and to $*$ otherwise. Item 2 above says that we want our definition to only depend on the partial Boolean function $f_C$; so, we define reasons for partial Boolean functions~$g$.

\begin{definition}[Sufficient reasons of partial Boolean function] \label{dfn:sufficientreason} 
Let $g$ be a partial Boolean function and let $\inst{x}$ be a positive instance of $g$. A \emph{sufficient reason of $g(\inst{x})=1$} is a term $t$ such that (i) $t$ is a prime-implicant of $g$, and (ii) $t$ is satisfied by $\inst{x}$.
\end{definition}

Then: define the sufficient reasons of a constrained decision-function $(f,C)$ to be those of the induced partial Boolean function $f_C$, i.e., if $\inst{x} \in C$ is a positive instance of $f$, then a \emph{sufficient reason of $\inst{x}$ wrt $(f,C)$} is a term $t$ such that (i) $t$ is a prime-implicant of $f_C$, and (ii) $t$ is satisfied by $\inst{x}$.

\begin{example} \label{ex:CDF}
Continuing the Example, suppose a constraint is specified by the formula $(X_1 \to X_2)$, thus $C  = \{(0,0), (0,1), (1,1)\}$. Table~\ref{tbl:PBF} provides both $f$ and the partial Boolean function $f_C$.
\begin{table}[ht]
     \begin{center}
     
        \begin{tabular}[h]{cc||cc}
        \toprule
        $X_1$ & $X_2$ & $f$ & $f_C$ \\
        \hline
        0 & 0 & 1 & 1  \\
        0 & 1 & 0 & 0 \\
        \rowcolor{Gray} 1 & 0 & 0 & * \\       
        1 & 1 & 1 & 1  \\
        \bottomrule
        \end{tabular}

      \caption{Partial Boolean function determined by the total function $f$ and the constraint $C$ specified by $(X_1 \to X_2)$. The row corresponding to the instance not in the constraint is greyed-out.}
      \label{tbl:PBF}
      \end{center}
\end{table}
The prime-implicants of $f_C$ are $\lnot X_2$ and $X_1$. 
The only sufficient reason for $f_C(0,0)=1$ is $\lnot X_2$, and the only sufficient reason for $f_C(1,1)=1$ is $X_1$.
\end{example}

\begin{remark}
Sufficient reasons of a negative instances can be handled by sufficient reasons of positive instances of the dual function $g$ defined as follows: $g(\inst{x}) = 0$ if $f(\inst{x}) = 1$; $g(\inst{x})=1$ if $f(\inst{x}) = 0$; and $g(\inst{x}) = *$ otherwise.
\end{remark}

We now justify Definition~\ref{dfn:sufficientreason} (Sufficient Reasons).

\begin{enumerate}

\item Our definition is a {generalisation} of  \emph{sufficient reason} from \cite{darwiche2020reasons}, called \emph{PI-explanation}  in 
\cite{shih2018symbolic}. Indeed, those works only handle decision-functions without constraints (i.e., in our terminology, those works have $f^* = \emptyset$). This explains why we use the same terminology, i.e., \emph{sufficient reasons}.

\item Using prime implicants to explain Boolean functions as well as individual instances has a long history in Science (see Section~\ref{sec:relatedwork} for a full discussion). This is generally motivated by the principle of parsimony, also known as Occam's razor in scientific domains, i.e., a reason should not be any more complicated than it needs to be. 

\item Subtle changes in the definition result in radically different types of reasons. This is discussed at the end of Section~\ref{sec:useful properties}.
\end{enumerate}

Assuming the constraints are known, the second justification means that \emph{reasons should not contain redundancies that are captured by the constraint}.
We illustrate this point by showing that, given a constrained decision function $(f,C)$, simply considering reasons of the total Boolean function $f$ (and ignoring the constraint $C$), may supply less parsimonious reasons.

\begin{example} \label{ex:compare}
Continuing the Example, note that the only sufficient reason for $f(0,0)=1$ is $(\lnot X_1 \land \lnot X_2)$ which is subsumed by $\lnot X_2$, a sufficient reason of $f_C(0,0)=1$; similarly, the only sufficient reason for $f(1,1)=1$ is $(X_1 \land X_2)$ which is subsumed by $X_1$, a sufficient reason of $f_C(1,1)=1$.
\end{example}

Example~\ref{ex:compare} accords with the intuition that, in light of the constraint $(X_1 \to X_2)$,  reason $X_1$ is preferred to reason $(X_1 \land X_2)$. This is no accident: reasons using $f_C$ are as least as parsimonious as reasons using $f$, as we know prove.

\begin{lemma} \label{lem:subsumed}
If $f,g$ are partial functions such that $f^1 \cup f^* \subseteq g^1 \cup g^*$, then every sufficient reason for $f(\inst{x})=1$ is subsumed by some sufficient reason of $g(\inst{x})=1$.
\end{lemma}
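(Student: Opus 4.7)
The plan is to show that any prime implicant $t$ of $f$ remains an implicant of $g$ under the hypothesis, and then to ``lift'' $t$ to a prime implicant of $g$ that still subsumes $t$ and is still satisfied by $\inst{x}$.

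First I would let $t$ be a sufficient reason of $f(\inst{x}) = 1$, so that $t$ is a prime implicant of $f$ and $\inst{x} \in [t]$. By definition of implicant, $[t] \subseteq f^1 \cup f^*$, and the hypothesis $f^1 \cup f^* \subseteq g^1 \cup g^*$ then immediately gives $[t] \subseteq g^1 \cup g^*$. Thus $t$ is already an implicant of $g$; it need not, however, be prime in $g$, because there may now exist weaker terms whose instance sets sit strictly between $[t]$ and $g^1 \cup g^*$.

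Next I would promote $t$ to a prime implicant of $g$ by a straightforward chain argument. Start with $t$; if it is prime in $g$, set $s := t$ and stop. Otherwise there is an implicant $t'$ of $g$ with $[t] \subsetneq [t']$; iterate. This produces a strictly ascending chain $[t] \subsetneq [t'] \subsetneq [t''] \subsetneq \cdots$ of subsets of the finite set $\{0,1\}^n$, so it must terminate at a prime implicant $s$ of $g$ with $[t] \subseteq [s]$. By transitivity of subsumption, $s$ subsumes $t$; and since $\inst{x} \in [t] \subseteq [s]$, the instance $\inst{x}$ satisfies $s$, so $s$ qualifies as a sufficient reason of $g(\inst{x}) = 1$.

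I do not expect a real obstacle here: once one makes the observation that the hypothesis is precisely what is needed for an implicant of $f$ to also be an implicant of $g$, the rest is routine. The only point worth a moment's care is the chain-termination step, which relies on the finiteness of the instance space (equivalently, on the fact that there are only finitely many terms over $\var$ up to logical equivalence). A minor subtlety to flag is that the conclusion ``sufficient reason of $g(\inst{x}) = 1$'' implicitly presumes $\inst{x} \in g^1$; this is automatic in the intended applications (e.g.\ $g = f_C$ with $\inst{x} \in C$ a positive instance of $f$), but would be worth stating as a standing assumption.
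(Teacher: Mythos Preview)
Your proposal is correct and follows essentially the same approach as the paper: show that an implicant of $f$ is an implicant of $g$ via the hypothesis, then invoke the fact that every implicant is subsumed by some prime implicant (the paper states this fact without proof, whereas you spell it out via the finite ascending-chain argument). Your flag about needing $\inst{x} \in g^1$ for the phrase ``sufficient reason of $g(\inst{x})=1$'' to be well-formed is a valid observation that the paper leaves implicit; indeed, the hypothesis only guarantees $\inst{x} \in g^1 \cup g^*$, and the intended application (Proposition~\ref{thm:subsumed}, where $g = f_C$ and $\inst{x} \in C$) is precisely what makes it hold.
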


\begin{proof}
Note that a sufficient reason of $f(\inst{x})=1$, being an implicant of $f$, is also an implicant of $g$ (since $f^1 \cup f^* \subseteq g^1 \cup g^*$), and that every implicant of a function is subsumed by some prime-implicant of that function.
\end{proof}

The following proposition establishes that $f_C$ supplies reasons that are as least as parsimonious as reasons using $f$:

\begin{proposition} \label{thm:subsumed}
Let $(f,C)$ be a constrained decision-function, and suppose $\inst{x} \in C$ is a positive instance of $f$. Then every sufficient reason of $f(\inst{x})=1$ is subsumed by some sufficient reason of $f_C(\inst{x})=1$.
\end{proposition}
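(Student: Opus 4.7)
The plan is to obtain the proposition as an immediate corollary of Lemma~\ref{lem:subsumed}, applied with the roles played by $f$ and $g = f_C$. To do so I need to verify the hypothesis of the lemma, namely $f^1 \cup f^* \subseteq f_C^1 \cup f_C^*$, and also that $\inst{x}$ is a positive instance of $f_C$ (so that ``sufficient reason of $f_C(\inst{x})=1$'' is even defined).

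First I would record that the total Boolean function $f$ has $f^* = \emptyset$, so $f^1 \cup f^* = f^1$. Second, I would unpack the definition of $f_C$ from the preceding paragraph: $f_C^1 = f^1 \cap C$, $f_C^0 = f^0 \cap C$, and $f_C^* = \{0,1\}^n \setminus C$. From these, $f_C^1 \cup f_C^* = (f^1 \cap C) \cup (\{0,1\}^n \setminus C)$, and a one-line case split on whether $\inst{y} \in C$ shows $f^1 \subseteq (f^1 \cap C) \cup (\{0,1\}^n \setminus C)$, giving the required inclusion $f^1 \cup f^* \subseteq f_C^1 \cup f_C^*$. Third, since $\inst{x} \in C$ and $f(\inst{x}) = 1$, the definition of $f_C$ gives $f_C(\inst{x}) = f(\inst{x}) = 1$, so $\inst{x}$ is indeed a positive instance of $f_C$.

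With these three observations, Lemma~\ref{lem:subsumed} directly yields that every sufficient reason of $f(\inst{x})=1$ is subsumed by some sufficient reason of $f_C(\inst{x})=1$, which is exactly what the proposition asserts. I do not expect any serious obstacle: the content of the proposition is morally the remark that enlarging the don't-care set can only relax the implicant condition $[t] \subseteq g^1 \cup g^*$, and Lemma~\ref{lem:subsumed} has already abstracted that remark into a reusable form. The only thing to watch out for is not to confuse ``$f$ viewed as a partial function'' (with empty don't-care set) with $f_C$, and to check that the inclusion goes in the right direction, which is what the short case split above guarantees.
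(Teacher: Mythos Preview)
Your proposal is correct and follows essentially the same approach as the paper: apply Lemma~\ref{lem:subsumed} after noting that $f^1 \cup f^* = f^1 \subseteq (f_C)^1 \cup (f_C)^*$, the equality because $f$ is total and the inclusion by definition of $f_C$. Your version is simply more explicit (you spell out the case split and verify that $\inst{x}$ is a positive instance of $f_C$), but the argument is the same.
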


\begin{proof}
Lemma~\ref{lem:subsumed} applies since $f^1 \cup f^* = f^1 \subseteq (f_C)^1 \cup (f_C)^*$. Indeed, the equality holds since $f$ is assumed total, and the containment holds by definition of $f_C$.
\end{proof}

It is not hard to find examples where the subsumption in Proposition~\ref{thm:subsumed} is always strict, see Example~\ref{ex:compare}. In particular, if $t$ properly subsumes $s$ then, $t$ is smaller than $s$ (i.e., contains less literals). And indeed, it is not hard to find examples where every sufficient reason of $f(\inst{x})=1$ is much larger than every sufficient reason of $f_C(\inst{x})=1$. To do this, consider the constraint $X_1 \to (X_2 \land X_3 \cdots \land X_n)$: then every reason of the form $X_1 \land X_2 \land \cdots \land X_n$ is subsumed by the reason $X_1$. %

\subsection{Reducing the  Constrained Case to the Unconstraint Case} \label{sec:useful properties}

It turns out that that there is a particular \emph{total} function whose reasons are exactly the same as those of a given partial Boolean function. 
This will allow us to {reuse} algorithms and tools that are already developed for reasoning about total Boolean functions, e.g.,~\cite{shih2018symbolic,darwiche2020reasons}.

\begin{definition} \label{dfn:fbar}
For a partial Boolean function $f$, define the total Boolean function $\pars{f}$ as follows: the value of $\pars{f}$ on $\inst{x}$ is equal to $1$ if $f(\inst{x}) = 1$ or $f(\inst{x}) = *$, and equal to $0$ otherwise.

\end{definition}

Sufficient reasons using $f$ and $\pars{f}$ are the same:
\begin{proposition} \label{thm:fbar}
Let $f$ be a partial Boolean function and $\inst{x}$ a positive instance of $f$. A term is a sufficient reason of $f(\inst{x})=1$ iff it is a sufficient reason of $\pars{f}(\inst{x})=1$. 
\end{proposition}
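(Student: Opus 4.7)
The plan is to reduce the statement to the observation that $f$ and $\pars{f}$ have the same set of implicants, after which primality and the satisfaction clause for $\inst{x}$ transfer for free. I do not anticipate any real obstacle; the proposition is essentially an unfolding of definitions, with the one small point requiring care being that primality of a term is defined relative to the ambient set of implicants, so equality of implicant-sets is exactly what we need.

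First, I would compute $\pars{f}^1 \cup \pars{f}^*$. Since $\pars{f}$ is total by Definition~\ref{dfn:fbar}, its don't-care set $\pars{f}^*$ is empty, and by construction $\pars{f}^1 = f^1 \cup f^*$. Hence
\[
\pars{f}^1 \cup \pars{f}^* \;=\; f^1 \cup f^*.
\]
By the definition of implicant for a partial Boolean function, a term $t$ is an implicant of $f$ iff $[t] \subseteq f^1 \cup f^*$, and similarly $t$ is an implicant of $\pars{f}$ iff $[t] \subseteq \pars{f}^1 \cup \pars{f}^*$. The two conditions are identical, so $f$ and $\pars{f}$ have exactly the same implicants.

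Since primality of an implicant $t$ is defined purely in terms of non-subsumption by any other implicant of the same function, and the pools of implicants coincide, the prime-implicants of $f$ and of $\pars{f}$ also coincide. Finally, the second clause of Definition~\ref{dfn:sufficientreason} (that $t$ be satisfied by $\inst{x}$) depends only on $t$ and $\inst{x}$, not on the function. Thus a term $t$ is a sufficient reason of $f(\inst{x})=1$ iff it is a sufficient reason of $\pars{f}(\inst{x})=1$, noting that $\inst{x}$ remains a positive instance when passing from $f$ to $\pars{f}$ since $f(\inst{x})=1$ implies $\pars{f}(\inst{x})=1$. This completes the argument.
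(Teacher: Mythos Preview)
Your proof is correct and follows essentially the same approach as the paper's: both argue that $f^1 \cup f^* = \pars{f}^1 \cup \pars{f}^*$ (equivalently $= \pars{f}^1$, since $\pars{f}$ is total), hence the implicants and prime-implicants coincide, and a positive instance of $f$ is also a positive instance of $\pars{f}$. Your version simply spells out the steps in more detail.
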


\begin{proof} 
Follows from the definitions, i.e., the implicants of $f$ and $\pars{f}$ are the same (since $f^1 \cup f^* = \pars{f}^1$), and thus the prime implicants of $f$ and $\pars{f}$ are the same, and a positive instance of $f$ is also a positive instance of $\pars{f}$.
\end{proof}

Thus, for a constrained decision-function $(f,C)$ with $f$ represented by the formula $\varphi$, and $C$ by the formula $\kappa$, the total function $\pars{f}$ is represented by $\kappa \to \varphi$. We are now in a position to illustrate the third point in the justification of Definition~\ref{dfn:sufficientreason}, i.e., that subtle changes in the definition result in radically different types of reasons. First, we have seen in the Examples that ignoring the constraints does not provide the most parsimonious reasons. Second, consider the variation in which one uses the total function represented by the formula $\kappa \land \varphi$. Although seemingly natural (indeed, why not assign a negative value to instances that do not satisfy the constraint), it is not hard to see, using Lemma~\ref{lem:subsumed}, that this results in the \emph{least parsimonious} reasons for $f$. On the other hand, using the formula $\kappa \to \varphi$ (or equivalently $f_C$ or equivalently $\pars{f}$), results in the \emph{most parsimonious} reasons. We find it striking that this change of perspective (implication vs conjunction) has such drastic changes on the parsimony of the produced reasons.

\subsection{Equivalent reasons}

We now explain that syntactically different reasons may be semantically equivalent, and thus we only need to consider one representative from each equivalence-class.

Suppose $t,s$ are syntactically different terms, but logically equivalent, i.e., $[t] = [s]$. Then there is no semantic difference between the terms, and so we do not distinguish between them. E.g., $t = X_1 \land X_2$ and $s = X_2 \land X_1$ are logically equivalent, and thus we generally do not distinguish between $s,t$. 
However, even if $s,t$ are not logically equivalent, they may be \emph{logically equivalent modulo $C$}, i.e., $C \cap [s] = C \cap [t]$. In this case, we say that $s,t$ are or simply \emph{constraint-equivalent}. For instance, if $C$ is represented by $(X_1 \lor X_2) \land \lnot (X_1 \land X_2)$ then $t = \lnot X_1$ is $C$-equivalent to $s = X_2$. Thus, we do not distinguish between $s$ and $t$. Practically speaking, we will only consider reasons up to constraint-equivalence, i.e., we will pick an arbitrary representative from each constraint-equivalence class.

More generally, say that $s$ is \emph{constraint-subsumed} by $t$ if $[s] \cap C \subseteq [t] \cap C$. Just as in the unconstrained case sufficient reasons are maximal in the subsumption order, it may be reasonable in the constrained case to require that reasons are maximal in the constraint-subsumption order. We do not require this in our notion of sufficient reason since, doing it may eliminate more succinct reasons, which might be undesirable. These tensions are illustrated in the Case Studies in Section~\ref{sec:experiments}, and further discussed in Section~\ref{sec:relatedwork}.

\section{Illustration} \label{sec:illustration}

In this section we illustrate sufficient reasons on a complete synthetic example of a learned classifier, inspired by an example in \cite{kisa2014probabilistic}.

Consider a tech-company that is trying to decide whether or not to shortlist a recent graduate from a particular school of computer science for a job interview. The company considers students who took any of Probability (P), Logic (L), Artificial Intelligence (A) or Knowledge Representation (K). Suppose that the company uses data on students who were hired in the past to learn a threshold-based classifier, and let $f$ be the associated \emph{total} decision function over $\var = \{L,K,P,A\}$ with onset $f^1 = \{(0011),(0110),(0111),(1100),(1101),(1110),(1111)\}$.

The details of how such an $f$ can be learned is not the focus of this paper, see, e.g., \cite{shih2018symbolic} for the case of Bayesian networks, \cite{choi2017compiling} for the case of Neural Networks or to \cite{audemard2020tractable} for Random Forests.

Consider an instance $\inst{x} = (0011)$ corresponding to students that did not take L or K, but did take P and A. Note that $f(\inst{x})=1$, i.e., the classifier decides to grant such students interviews. What is the reason behind this decision? Table~\ref{tbl:reasons} gives the reasons, and we see that the only reason behind the decision of $f$ for $\inst{x} = (0011)$ is $(\lnot L \land P \land A)$, i.e., that the student did not take L, but did take P and A.

\begin{table}[ht]

        \begin{tabular}[t]{p{1mm}p{1mm}p{1mm}p{1mm}|p{3.2cm}|p{2cm} } 
          \toprule

         L & K & P & A  & Reasons for $f(\inst{x})=1$ &  Reasons for $f_C(\inst{x})=1$ \\
        \hline
        0 & 0 & 1 & 1  & 
            $(\neg L \land P \land A)$  & 
            $(\neg L \land A)$ \\        
        0 & 1 & 1 & 1 & 
            $(\neg L \land P \land A)$, $(K \land P)$ & 
             $(\neg L \land A)$, $K$ 
            \\      
        1 & 1 & 0 & 0 & 
            $(L \land K)$ & 
            $K$ 
            \\     
        1 & 1 & 1 & 0 & 
            $(L \land K)$, $(K \land P)$ & 
            $K$ 
            \\    
        1 & 1 & 1 & 1 & 
            $(L \land K)$, $(K \land P)$ & 
            $K$ \\

        \bottomrule
      \end{tabular}
      \caption{Positive instances that satisfy the constraints, and reasons.
      }
      \label{tbl:reasons}

\end{table}

Suppose, that a student's enrolments must satisfy the following constraints $C$: a student must take P or L,  
$(P \lor L)$;
  the prerequisite for A is P,
  $(A \rightarrow  P)$;
 the prerequisite for K is A or L, 
  $(K \rightarrow  (A \lor L))$. Reasons of the constrained decision-function $f_C$ are given in Table~\ref{tbl:reasons}. Note $(\lnot L \land A)$ and $K$ are not constraint-equivalent, and indeed are incomparable in the constraint-subsumption order.

Consider the reason behind the decision $f(\inst{x})=1$ where $\inst{x} = (0011)$, i.e., $\lnot L \land A$.  This reason strictly subsumes the reason $\lnot L \land P \land A$ used by the original (unconstrained) classifier $f$. This phenomenon, that for every positive instance $\inst{x}$ in $C$, every sufficient reason of $f(\inst{x})=1$ is subsumed by some sufficient reason of $f_C(\inst{x})=1$, can be seen in all other rows of Table~\ref{tbl:reasons}. This illustrates that our notion  of sufficient reason (Definition~\ref{dfn:sufficientreason}) systematically eliminates such redundancies, a fact we  formalised in Proposition~\ref{thm:subsumed}.

\section{Case Studies and Validation} \label{sec:experiments}

In this section we validate our theory on constrained decision-functions learned from data.\footnote{For the experiments, we restrict our attention to binary data. Continuous data can be discretised, and discrete/categorical data can be binarised as shown in~\cite{breiman1984classification}.}
Figure~\ref{fig:schematic} illustrates the workflow. 
The case-studies showcase two major types of constraints that
can arise in AI: (i) constraints due to pre-processing of data
(e.g. one-hot, or other categorical, encodings); (ii) inherent semantic-constraints in input space.

We focus on interpretable classifiers, namely decision trees (learned using the Recursive Partitioning RPART algorithm in R \cite{therneau1997introduction}). This allows us to compare sufficient reasons (both with and without constraints) of decisions with the reasons specified by the corresponding branch of the decision tree~\cite{breiman1984classification}

\paragraph{Algorithms} \label{sec:algorithms}
We consider the following computational problem: given a partial Boolean function $f$, and a positive instance $\inst{x}$, find all sufficient reasons of $f(\inst{x})=1$. By Proposition~\ref{thm:fbar}, one can reuse algorithms developed for total Boolean functions: simply replace $f$ by the total Boolean function $\pars{f}$.

For instance, the computational problem is solved in \cite{shih2018symbolic} for total Boolean functions by computing prime-implicants using the Shannon-expansion recursive procedure~\cite{coudert1994two} and limiting the recursive calls at every step to those that satisfy a given instance $\inst{x}$.

Moreover,~\cite{shih2018symbolic} use circuit-representations of functions and sets of prime-implicants. This is the approach we use in our experiments. 

In particular, for a given variable order $v$, we build a few $\textrm{OBDD}_<$ \cite{darwiche2002knowledge} with variable order $v$. First, an $\textrm{OBDD}_<$ representing $C$ which we complement, and then disjoin that complement with an $\textrm{OBDD}_<$ representing the decision-function $f$, in order to get the $\textrm{OBDD}_<$ representing the function $\pars{f_C}$.

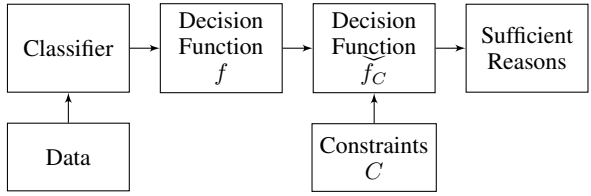
\begin {figure}[t]
\centering
\begin{tikzpicture}[node distance=4mm, >=latex',
 block/.style = {draw, rectangle, minimum height=5mm, minimum width=18mm,align=center},
rblock/.style = {draw, rectangle, rounded corners=0.5em},
tblock/.style = {draw, trapezium, minimum height=10mm, 
                 trapezium left angle=75, trapezium right angle=105, align=center},
scale=1, every node/.style={scale=0.9}
                        ]
    
    \node [block, minimum height=13mm]       (classifier)   {Classifier};
    \node [block, right=of classifier, minimum height=13mm]     (bf)  {Decision\\Function\\$f$};
    \node [block, right=of bf, minimum height=13mm]     (formula)  {Decision\\Function\\$\pars{f_C}$};
    \node [block, right=of formula, minimum height=13mm]    (reasons)      {Sufficient\\ Reasons};
    \node [block, below=of classifier, minimum height=10mm]                      (data)     {Data};
    \node [block, below=of formula, minimum height=10mm]     (constraints1)  {Constraints\\$C$};

    \path[draw,->] (data)      edge (classifier)
                    (classifier)    edge (bf)
                   (bf)   edge (formula)
                   (formula)   edge (reasons)
                   ;

     \path[draw,<-] 
                    (formula)       edge [midway](constraints1)
                    ;
\end{tikzpicture}

\caption{Workflow}
\label{fig:schematic}
\end{figure}

\paragraph{Case Study 1.}  We used the dataset of Corticosteroid Randomization after Significant Head Injury (CRASH) trial \cite{mrc2008predicting} and based our study on 11 clinically relevant variables described in \cite{zador2016predictors}.
Input variables include demographics, injury characteristics and image findings,
six of which are categorical, and the rest are Boolean.\footnote{Categorical variables are: \textbf{A}ge(7), \textbf{E}ye(4) \textbf{M}otor(6), \textbf{V}erbal(5), \textbf{P}upils(3). The Boolean are: $EC, PH, OB, SA, MD, HM$.}

Outcome variable $gos$ $1$ ($0$) indicates favourable (unfavourable) outcomes (e.g., death or severe disability).

Categorical variables are encoded using a one-hot encoding, which induces the constraint $C$ as follows. For a categorical variable $X$, let $D$ denote a set of Boolean variables corresponding to the set of categories of $X$. The corresponding constraint says that exactly one of the variables in $D$ must be true. 
For example, variable $Eye$ (shortened to $E$) has 4 categories, which we encode by the Boolean variables in $D_{E} = \{E_1, E_2, E_3, E_4 \}$. The corresponding constraint is $\bigvee_i E_i \land \bigwedge_{i \neq j} \lnot (E_i \land E_j)$, where $i,j$ vary over $\{1,2,3,4\}$. The constraint $C$ is the conjunction of all such constraints, one for each categorical variable.

Following \cite{steyerberg2008predicting} we base our example on 6945 cases with no missing values. 
RPART (seed: 25, train: 0.75, cp: 0.005) correctly classifies 75.69\% of instances in the test set (ROC 0.77).

Figure \ref{fig:rpart} illustrates the model.

\begin{figure}[h]
  \centering
      \includegraphics[width=140pt]{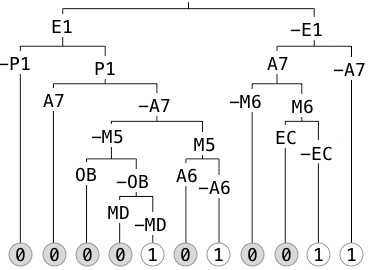}
        \caption{RPART decision tree for case study1.}
        \label{fig:rpart}
\end{figure}

Consider the instance $\inst{x}$ that maps 
 $A_1$, $E_1$, $M_5$, $V_2$, $P_1$, $OB$, $MD$ to $1$, and all the other variables to $0$.

There is one sufficient reason obtained from the decision-function $f$ of the decision tree:

            $\lnot A_6 \land \lnot A_7 \land M_5 \land P_1$.

There are eight sufficient reasons obtained from $\pars{f_C}$ (up to logical equivalence), but only two up to constraint-equivalence: Reason 1) $A_1 \land M_5 \land P_1$ and Reason 2) $\lnot A_6 \land \lnot A_7 \land M_5 \land P_1$.

\paragraph{Discussion of Case-Study 1.}

The standard explanation from the learned decision tree is $E_1 \land P_1   \land \lnot A_7 \land M_5 \land \lnot A_6$. It is strictly subsumed by (and thus longer than) the sufficient reason using $f$. This shows that decision-rules may not be the best explanations. Further, as we see, taking the constraints into account may result in more succinctness. Note that, as guaranteed by Proposition~\ref{thm:subsumed}, the reason using $f$ is subsumed by some reason using $\pars{f_C}$, in fact it appears as the second reason.

We remark that reasons 1) and 2) are not constraint-equivalent (and thus should be considered different reasons). Which reason should one prefer? On the one hand, Reason 1) is more succinct. On the other hand, Reason 2) strictly constraint-subsumes Reason 1), i.e., it applies to more instances. Without additional preferences regarding succinctness versus generality, there is no reason to prefer one over the other, and thus we return both of them.

Finally, we remark that if one had used the function $\cons{f_C}$ instead, one would get one sufficient reason for this decision that is highly redundant in light of the one-hot constraint, i.e.,
$(A_1 \land E_1 \land M_5 \land V_2 \land P_1 \land \bigwedge_{X \in V} \lnot X)$
where $V$ consists of all the remaining variables $A_2, A_3, \cdots, E_2, E_3, \cdots$.

\paragraph{Case Study 2.}

We used the Tic-Tac-Toe Endgame dataset from the UCI machine learning repository as binarised in~\cite{verwer2019learning}: for each of the 9 board positions (labelled as in Table~\ref{tbl:ttt2}a.), introduce variables $F_{i,X}$ (resp. $F_{i,O}$) capturing whether or not $X$ (resp. $O$) was placed in that position. This induces constraints $C$ that are different to Case Study $1$. We let $C$ be the constraint that expresses that no position contains both an $X$ and a $O$ (although it may have neither), i.e., $C$ is represented by $\bigwedge_{0 \leq i \leq 8} \lnot (F_{i,X} \land F_{i,O})$. Outcome variable $won$ 1 (0) is a win (loss) for player X.

 \begin{figure}[h]
  \centering
      \includegraphics[width=\linewidth]{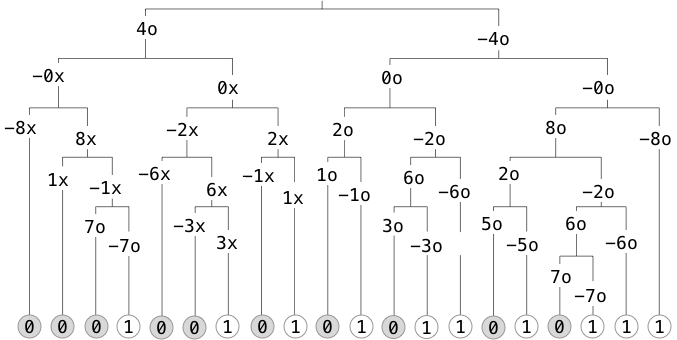}
        \caption{RPART decision tree for case study 2. We drop $F$ and write, e.g., $4o$ instead of $F_{4,O}$ for readability.}
        \label{fig:CRASH-DT}
\end{figure}

RPART (seed 1, train: 0.7, cp 0.01) gives 20 rules (Figure~\ref{fig:CRASH-DT}) with 93\% accuracy for the test set (ROC 0.97).

Consider the instance drawn in Table~\ref{tbl:ttt2}b.
\renewcommand{\arraystretch}{1.2}%
\begin{table}[h!]
  \centering
  \begin{tabular}{ccc}

    a.
    \begin{minipage} {.12\textwidth}
        \begin{center}
            \begin{tabular}[h]{c|c|c}
    
            0 & 1 &  2 \\
            \hline
            3  & 4 & 5 \\
            \hline
            6 & 7 & 8 \\       
            \end{tabular}

          \label{tbl:ttta}
        \end{center}
    \end{minipage}
    &
    b.
    \begin{minipage} {.12\textwidth}

         \begin{center}
            \begin{tabular}[h]{c|c|c}
    
            X & X & X  \\
            \hline
              &  &  \\
            \hline
            O  &  & O \\       
            \end{tabular}

          \label{tbl:tttb}
        \end{center}
    \end{minipage}
    &
    c.
    \begin{minipage} {.12\textwidth}

         \begin{center}
            \begin{tabular}[h]{c|c|c}
    
            01 & 01 & 01  \\
            \hline
            00  & 00 & 00 \\
            \hline
            10 & 00 & 10 \\       
            \end{tabular}

          \label{tbl:tttc}
        \end{center}
    \end{minipage}
  \end{tabular}
  \caption{a. TTT board; b. Positive instance; c. Encoded instance (cell $i$ is labelled by the values of $F_{i,X} F_{i,O})$.}
  \label{tbl:ttt2}
\end{table}

There are 8 sufficient reasons using $f$: 
For instance, Reason 1 is $\bigwedge_{i \in \{0,2,4,7\}} \lnot F_{i,O}$ and Reason 5 is $\bigwedge_{i \in \{0,2,7\}} \lnot F_{i,O} \land \bigwedge_{i \in \{0,1,2\}} F_{i,X}$. Note that the latter reason is redundant in light of the constraint $C$ (as witnessed, e.g., by the inclusion of the  literals $\lnot F_{0,O}$ and $ F_{0,X}$).
    \begin{table}[h!]
          \begin{tabular}{cccc}
        
                1.
            \begin{minipage} {.08\textwidth}
                \begin{center}
                0\texttt{-} \texttt{-}\texttt{-} 0\texttt{-} \\
                \texttt{-}\texttt{-} 0\texttt{-} \texttt{-}\texttt{-} \\
                \texttt{-}\texttt{-} 0\texttt{-} \texttt{-}\texttt{-}\\
                \end{center}
            \end{minipage}
            &
                    2.
            \begin{minipage} {.08\textwidth}
                    
                \begin{center}
                 0\texttt{-} \texttt{-}\texttt{-} \texttt{-}\texttt{-} \\
                 \texttt{-}\texttt{-} 0\texttt{-} 0\texttt{-} \\
                 \texttt{-}\texttt{-} 0\texttt{-} \texttt{-}\texttt{-}\\
                \end{center}
            \end{minipage}
            
             &
                     3.
            \begin{minipage} {.08\textwidth}
                \begin{center}
                    \texttt{-}\texttt{-} \texttt{-}\texttt{-} 0\texttt{-} \\
                    0\texttt{-} 0\texttt{-} \texttt{-}\texttt{-} \\
                    \texttt{-}\texttt{-} 0\texttt{-} \texttt{-}\texttt{-}\\
                \end{center}
            \end{minipage}
            &
          
                     4.
            \begin{minipage} {.08\textwidth}
                \begin{center}
                    \texttt{-}\texttt{-} 0\texttt{-} \texttt{-}\texttt{-} \\
                    0\texttt{-} 0\texttt{-} 0\texttt{-} \\
                    \texttt{-}\texttt{-} 0\texttt{-} \texttt{-}\texttt{-}\\
                \end{center}
            \end{minipage}\\
            ~\\
            
                5.
            \begin{minipage} {.08\textwidth}
                \begin{center}
                01 \texttt{-}1 01 \\
                \texttt{-}\texttt{-} \texttt{-}\texttt{-} \texttt{-}\texttt{-} \\
                \texttt{-}\texttt{-} 0\texttt{-} \texttt{-}\texttt{-}\\
                \end{center}
            \end{minipage}
            &
                    6.
            \begin{minipage} {.08\textwidth}
                    
                \begin{center}
                 
                01 \texttt{-}1 \texttt{-}1 \\
                \texttt{-}\texttt{-} \texttt{-}\texttt{-} 0\texttt{-} \\
                \texttt{-}\texttt{-} 0\texttt{-} \texttt{-}\texttt{-}\\

                \end{center}
            \end{minipage}
            
             &
                     7.
            \begin{minipage} {.08\textwidth}
                \begin{center}
                   \texttt{-}1 \texttt{-}1 01 0\texttt{-} \texttt{-}\texttt{-} \texttt{-}\texttt{-} \texttt{-}\texttt{-} 0\texttt{-} \texttt{-}\texttt{-}\\

                \end{center}
            \end{minipage}
            &
          
                     8.
            \begin{minipage} {.08\textwidth}
                \begin{center}
                    \texttt{-}1 01 \texttt{-}1 0\texttt{-} \texttt{-}\texttt{-} 0\texttt{-} \texttt{-}\texttt{-} 0\texttt{-} \texttt{-}\texttt{-}
                \end{center}
            \end{minipage}
            
          \end{tabular}
    \end{table}

For $\pars{f_C}$, there are 11 reasons, including reasons 1-4 of $f$. We show 4 due to space limitation. Reason A subsumes the rest of reasons 5-8 of $f$ and might be preferred.

    \begin{table}[h!]
          \begin{tabular}{cccc}
        
                \textbf{A.}
            \begin{minipage} {.08\textwidth}
                \begin{center}
                 \texttt{-}1 \texttt{-}1 \texttt{-}1 \\
                 \texttt{-}\texttt{-} \texttt{-}\texttt{-} \texttt{-}\texttt{-} \\
                 \texttt{-}\texttt{-} 0\texttt{-} \texttt{-}\texttt{-}
                \end{center}
            \end{minipage}

            & B.
            \begin{minipage} {.08\textwidth}
                \begin{center}
                \texttt{-}1 \texttt{-}\texttt{-} \texttt{-}\texttt{-} \\
                \texttt{-}\texttt{-} 0\texttt{-} 0\texttt{-} \\
                \texttt{-}\texttt{-} 0\texttt{-} \texttt{-}\texttt{-}
                \end{center}
            \end{minipage}
            &
               C.
            \begin{minipage} {.08\textwidth}
                \begin{center}
                 \texttt{-}\texttt{-} \texttt{-}\texttt{-} \texttt{-}1 \\
                 0\texttt{-} 0\texttt{-} \texttt{-}\texttt{-} \\
                 \texttt{-}\texttt{-} 0\texttt{-} \texttt{-}\texttt{-}
                \end{center}
            \end{minipage}

            & D.
            \begin{minipage} {.08\textwidth}
                \begin{center}
                0\texttt{-} \texttt{-}\texttt{-} \texttt{-}1 \\
                \texttt{-}\texttt{-} 0\texttt{-} \texttt{-}\texttt{-} \\
                \texttt{-}\texttt{-} 0\texttt{-} \texttt{-}\texttt{-}
                \end{center}
            \end{minipage}
     \end{tabular}
    \end{table}

Consider the following constraint $C'$ that captures that X moves first and players alternate moves:
$\bigvee_{S,T} (\psi_S \land \varphi_T)$ where $S,T$ vary over all subsets $S,T$ of $\{0,1,2,\cdots,8\}$  such that $S \cap T = \emptyset$, and $0 \leq |S| - |T| \leq 1$, and $\psi_S$ is 
$\Big(\bigwedge_{N \in S} F_{N,X} \Big) \land \big(\bigwedge_{N \in U \setminus S} \lnot F_{N,X} \big)$ and 
$\varphi_T$ is $\Big(\bigwedge_{N \in T} F_{N,O}\Big) \land \big(\bigwedge_{N \in U \setminus T} \lnot F_{N,O} \big)$.  The formula expresses that the set $S$ of positions where X has played is disjoint from the set $T$ where $O$ has played, and that either there are the same number of moves, or $X$ has played one more.
In this case there are 46 sufficient reasons for the instance above, 
none of which are constraint-$C'$ equivalent, including the following which are not subsumed by any of the reasons using just the binarisation constraint $C$.

 \begin{table}[h!]
          \begin{tabular}{cccc}
        i.
             \begin{minipage} {.08\textwidth}
                 \begin{center}
                    \texttt{-}\texttt{-} \texttt{-}\texttt{-} \texttt{-}\texttt{-} \\
                      00 00 00 \\
                     \texttt{-}0 00 \texttt{-}\texttt{-}
                
                     \end{center}
                 \end{minipage}
             &
    ii.
            \begin{minipage} {.08\textwidth}
                \begin{center}
                \texttt{-}\texttt{-} \texttt{-}\texttt{-} \texttt{-}\texttt{-}\\
                \texttt{-}\texttt{-} 00 00\\
                \texttt{-}\texttt{-} 00 1\texttt{-}
                \end{center}
            \end{minipage}
            
            &
    iii.
            \begin{minipage} {.08\textwidth}
                    
                \begin{center}
                \texttt{-}1 \texttt{-}\texttt{-} \texttt{-}\texttt{-}\\
                \texttt{-}0 00 \texttt{-}\texttt{-}\\
                1\texttt{-} 00 \texttt{-}\texttt{-}\\
                                
                \end{center}
            \end{minipage}
          
            &
         iv.   
            \begin{minipage} {.08\textwidth}
                \begin{center}
                 \texttt{-}\texttt{-} \texttt{-}\texttt{-} \texttt{-}\texttt{-} \\
                 00 00 0\texttt{-} \\
                 1\texttt{-} 0\texttt{-} \texttt{-}0
                \end{center}
            \end{minipage}
    \end{tabular}
            
\end{table}

For instance, Reason ii says that, \emph{given that we know the board is the result of a valid play}, if positions 4,5,7 are blank and position 8 has an O, then player X must have won. This is indeed correct: player O could not have won (since with 5 moves in the game player O can only move twice), and there could not be a draw (because not all positions were filled yet).

\section{Related Work and Discussion} \label{sec:relatedwork}

Our theory generalises \cite{shih2018symbolic,darwiche2020reasons} to handle domain constraints. We also show how to reduce the constrained-case to the unconstrained case, thus allowing one to reuse existing symbolic algorithms and tools~\cite{shih2018symbolic}. There are other approaches to handle the unconstrained case: purely heuristic (which do not provide guarantees on the quality of explanations) ~\cite{ribeiro2016should,lakkaraju2019faithful,iyer2018transparency} and a combination of heuristic and abductive reasoning~ \cite{ignatiev2019abduction}. 

\paragraph{Prime-implicants as Explanations.}
Prime implicants have been used for reasoning and providing explanations in a number of different settings: in Electrical Engineering for circuit minimisation~\cite{brayton1989exact,mcgeer1993espresso};
in model-based diagnosis in AI, as a set of diagnoses sufficient to explain every state of a system in which faulty behaviour is observed~\cite{de1992characterizing,reiter1987theory};  in system reliability analysis, as smallest combinations of events that could lead a system to failure~\cite{coudert1993fault}; in Bioinformatics as ``predictive explanations" of gene interaction networks \cite{yordanov2016method}; and in the Social Sciences, as minimal combinations of causal conditions~\cite{thiem2013boolean}. 

Prime implicants of partial Boolean functions \cite{coudert1994two} are based on the early works of \cite{nelson1955weak,mccluskey1956minimization} and were used to further minimize a Boolean function when the output of the function for some instance was considered to be inconsequential/undefined. Those instances are often assigned to an arbitrary output if doing so results in obtaining prime implicants with fewest numbers of literals \cite{mccluskey1956minimization}. Inspired by these earlier works, our definition of sufficient reasons for constrained decision functions, treats constraints in a systematic manner to produce more parsimonious explanations.

\paragraph{Explanations in ML literature.}
The ML literature has techniques for producing (post-hoc local) rule-based explanations which are similar in spirit to the logic-based method of this paper. 
Notably, the \emph{anchors} of~\cite{Ribeiro:AAAI18} are analogous to implicants. That work: 1) is probabilistic, e.g., it works directly on a probabilistic model while our method works on Boolean functions representing a possibly probabilistic model; 2) aims to optimise the coverage of anchors (i.e., the probability that the anchor applies to a random instance), which is {not} an analogue of \emph{prime-implicant}, and thus potentially misses out valid explanations (indeed, that work has no analogue of subsumption); 3) does not explicitly handle constraints, while this is the main focus of our work.

\paragraph{Discussion}
The crux of our paper shows how to handle constraints, and that ignoring constraints could result in unnecessarily long/complex reasons, as well as some  reasons that may be missed altogether.

A general critique of the prime-implicant based approach to reasoning is that reasons may become too large to comprehend when the number of variables is large. Notice that our method is a step towards improving this problem in the presence of constraints. If the shortest reason in presence of constraints is still too large to comprehend, not taking constraints into account may results in reasons that are even larger and even harder to comprehend.

To validate the claim that using constraints results in no longer, and sometimes shorter reasons, in the two case studies, we compared (with equivalence, subsumption, constraint-equivalence and constraint-subsumption tests), every single reason obtained from decision function $f$ with that of $\pars{f}$ and observed that while 
adding constraints may decrease or increase the \emph{number} of reasons, it never increases the \emph{size} of the shortest reasons (guaranteed by Proposition 1). 

In the illustrative example of Section 5, adding constraints reduced the number, as well as the size of reasons for some instances, while in both of the case studies of Section 6, adding constraints increased the number of reasons, but reduced the size of some reasons.
Furthermore, in Case Study 2 we demonstrated that when constraints are not taken into account, some reasons may be missed altogether, and provided some examples of such reasons.

In cases of multiple (constraint-inequivalent) reasons for a decision (even amongst the shortest ones), we do not supply a way to pick one reason over another, a challenging problem~\cite{lakkaraju2019faithful}. Indeed, preferring one reason over another would require \emph{additional assumptions} about preferred reasons, e.g., favouring succinctness over generality  \cite{miller1}. 

Another noteworthy point is that in both of the case studies, we used decision trees. 
In effect, what matters in our investigation, is not the \emph{type} of a classifier, nor the \emph{method} of obtaining its decision-function.

What matters is \emph{how} constraints are handled at the level of the decision function (not its representation), i.e., what values the instances ruled out by the constraints are mapped to. 

Since small decision trees are often considered interpretable, we chose them for our experiments, as they also allow one to ``read-off'' reasons from their branches in order to compare with our most-parsimonious reasons. In fact, using decision trees yielded the following observations.
In Case Study 1, we showed that while being indeed interpretable, the decision rules derived from decision trees may not be the most parsimonious explanations, and demonstrated this point with an example in the discussion of Case Study 1.
This left us with a question: are decision rules derived from an \emph{optimal} decision tree, sufficient reasons? A question we leave for future investigations.

Another avenue for future work is to  experiment with models that are considered to be less interpretable, such as Bayesian Networks, Random Forests or Neural Networks. In these cases one might compare our most-parsimonious reasons with local reasons produced by the machine-learning community, such as ``anchors''~\cite{Ribeiro:AAAI18}, which would amount to finding approximate reasons \cite{ignatiev2019validating}, clearly out of the scope of this paper.

Finally, our work opens up applications that are currently only available in the unconstrained setting, e.g., computing complete reasons, decision counterfactuals, and classifier bias~\cite{darwiche2020reasons}.

\bibliographystyle{named}
\bibliography{main}

\end{document}